\documentclass{article}

\usepackage{arxiv}
\usepackage{nicefrac}
\usepackage[utf8]{inputenc} 
\usepackage[T1]{fontenc}  
\usepackage{microtype} 
\usepackage{url} 
\usepackage{amsfonts}
\usepackage{booktabs} 
\usepackage[utf8]{inputenc}
\usepackage{caption}
\usepackage{subcaption}
\usepackage{csquotes}
\usepackage{xcolor}
\usepackage{graphicx}
\usepackage{amsmath,amssymb,amsthm}
\usepackage{algorithm}
\usepackage[noend]{algorithmic}
\usepackage{float}
\usepackage{soul}
\usepackage{tikz}
\usepackage{pgfplots}
\usepackage{multirow}
\usepackage[inline]{enumitem}

\sethlcolor{lightgray}


\newcommand{\COMM}[1]{\hfill\textcolor{gray!80}{// #1}}
\DeclareMathOperator*{\argmax}{arg\,max}

\newtheorem{theorem}{Theorem}
\newtheorem{corollary}{Corollary}
\pgfplotsset{compat=1.17}

\author{ Adel Nikfarjam \\
Optimisation and Logistics\\School of Computer Science\\The University of Adelaide\\
  \texttt{adel.nikfarjam@adelaide.edu.au} \\
  \And
  Anh Viet Do\\
Optimisation and Logistics\\School of Computer Science\\The University of Adelaide\\
  \texttt{vietanh.do@adelaide.edu.au} \\
  \And
   Frank Neumann \\
Optimisation and Logistics\\School of Computer Science\\The University of Adelaide\\
  \texttt{frank.neumann@adelaide.edu.au} \\}






\begin{document}
\title{Analysis of Quality Diversity Algorithms for the Knapsack Problem}
\maketitle




\begin{abstract}
Quality diversity (QD) algorithms have been shown to be very successful when dealing with problems in areas such as robotics, games and combinatorial optimization. They aim to maximize the quality of solutions for different regions of the so-called behavioural space of the underlying problem. In this paper, we apply the QD paradigm to simulate dynamic programming behaviours on knapsack problem, and provide a first runtime analysis of QD algorithms. We show that they are able to compute an optimal solution within expected pseudo-polynomial time, and reveal parameter settings that lead to a fully polynomial randomised approximation scheme (FPRAS). Our experimental investigations evaluate the different approaches on classical benchmark sets in terms of solutions constructed in the behavioural space as well as the runtime needed to obtain an optimal solution.
\keywords{Quality Diversity, Runtime Analysis, Dynamic Programming.}
\end{abstract}


\section{Introduction}

Computing diverse sets of high quality solutions has recently gained significant interest in the evolutionary computation community under the terms Evolutionary Diversity Optimisation~(EDO) and Quality Diversity~(QD). With this paper, we contribute to the theoretical understanding of such approaches algorithms by providing a first runtime analysis of QD algorithms. We provide rigorous results for the classical knapsack problem and carry out additional experimental investigations on the search process in the behavioral space.

Diversity is traditionally seen as a mechanism to explore niches in a fitness landscape and prevent premature convergence during evolutionary searches. On the other hand, the aim of EDO is to explicitly maximise the structural diversity of a set of solutions, which usually have to fulfill some quality criteria. The concept was first introduced in \cite{ulrich2011maximizing} in a continuous domain. Later, EDO has been adopted to evolve a set of images~\cite{alexander2017evolution} and benchmark instances for traveling salesperson problem~(TSP) \cite{doi:10.1162/evcoa00274}. The star-discrepancy and the indicators from multi-objective evolutionary algorithms have been incorporated in EDO for the same purpose as the previous studies in \cite{neumann2018discrepancy} and \cite{neumann2019evolutionary}, respectively. More recently, EDO has been investigated in context of computing a diverse set of solutions for several combinatorial problems such as TSP in \cite{viet2020evolving,NikfarjamBN021a,NikfarjamB0N21b}, the quadratic assignment problem~\cite{DoGN021}, the minimum spanning tree problem~\cite{Bossek021tree}, the knapsack problem~\cite{BossekN021KP}, the optimisation of monotone sub-modular functions~\cite{NeumannB021}, and traveling thief problem \cite{NikTTPEDO}. 

On the other hand, QD explores a predefined behavioural space to find niches. It recently has gained increasing attention among the researchers in evolutionary computation. The optimisation paradigm first emerged in the form of novelty search, in which the goal is to find solutions with unique behaviours aside from the quality of solutions \cite{LehmanS11}. Later, a mechanism is introduced in \cite{CullyM13} to solely retain best-performing solutions while exploring new behaviours. An algorithm, named MAP-Elite is introduced in \cite{CluneML13} to plot the distribution of high-performing solutions in a behavioural space. MAP-Elite is shown efficient in developing behavioural repertoire. QD was coined as a term, and defined as a concept in \cite{PughSSS15,PughSS16}. The paradigm has been widely applied in the context of robotic and gaming \cite{RakicevicCK21,ZardiniZZIF21,SteckelS21,FontaineTNH20,FontaineLKMTHN21}. More recently, QD has been adopted for a multi-component combinatorial problem, namely traveling thief problem \cite{NikfarjamMap}. Bossek and Neumann \cite{JakobQD} generated diverse sets of TSP instances by the use of QD. To the best of our knowledge, the use of QD in solving a combinatorial optimisation problem is limited to an empirical study \cite{NikfarjamMap}. Although the QD-based algorithm has been shown to yield very decent results, theoretical understandings of its performance have not yet been established.

In this work, we contribute to this line of research by theoretically and empirically studying QD for the the knapsack problem (KP), with a focus on connections between populating behavioural spaces and constructing solutions in dynamic programming (DP) manner. The use of evolutionary algorithms building populations of specific structure to carry out dynamic programming  has been studied in~\cite{DBLP:journals/tcs/DoerrENTT11,DBLP:conf/evoW/Theile09,DBLP:conf/foga/Horoba09}. 
We consider a more natural way of enabling dynamic programming behavior by using QD algorithms with appropriately defined behavioural spaces.
To this end, we define two behavioural spaces based on weights, profits and the subset of the first $i$ items, as inspired by dynamic programming (DP) \cite{DBLP:Toth80} and the classic fully polynomial-time approximation scheme (FPTAS) \cite{Vijay2001}. Here, the scaling factor used in the FPTAS adjusts the niche size along the weight/profit dimension. We formulate two simple mutation-only algorithms based on MAP-Elite to populate these spaces. We show that both algorithms mimic DP and find an optimum within pseudo-polynomial expected runtime. Moreover, we show that in the profit-based space, the algorithm can be made into a fully polynomial-time randomised approximation scheme (FPRAS) with an appropriate choice of the scaling value. Our experimental investigation on various instances suggests that these algorithms significantly outperforms $(1+1)$EA and $(\mu+1)$EA, especially in hard cases. With this, we demonstrate the ability of QD-based mechanisms to imitate DP-like behaviours in KP, and thus its potential value in black-box optimisers for problem with recursive subproblem structures.

The remainder of the paper is structured as follows. We formally define the knapsack problem, the behavioural spaces, and the algorithms in \ref{sec:prob}. Next, we provide a runtime analysis for the algorithms in \ref{sec:analysis}. In Section \ref{sec:experiments}, we examine the distribution of high-quality knapsack solutions in the behavioural spaces and compare QD-based algorithms to other EAs. Finally, we finish with some concluding remarks.

\section{Quality-Diversity for the knapsack problem}
\label{sec:prob}
The knapsack problem is defined on a set of items $I$, where $|I|=n$ and each item $i$ corresponds to a weight $w_i$ and a profit $p_i$. Here, the goal is to find a selection of item $x = (x_1, x_2, \ldots , x_n)$ that maximise the profit while the weight of selected items is constrained to a capacity $C$. Here, $x$ is the characteristic vector of the selection of items. Technically, KP is a binary linear programming problem: let $w=(w_1,\ldots,w_n)$ and $p=(p_1,\ldots,p_n)$, find
$
\argmax_{x\in\{0,1\}^n}\left\lbrace p^Tx\mid w^Tx\leq C\right\rbrace.$
We assume that all items have weights in $(0,C]$, since any item violating this can be removed from the problem instance.


\begin{figure}[t]
\centering
\begin{subfigure}{.4\linewidth}
\centering
\includegraphics[width=0.8\columnwidth]{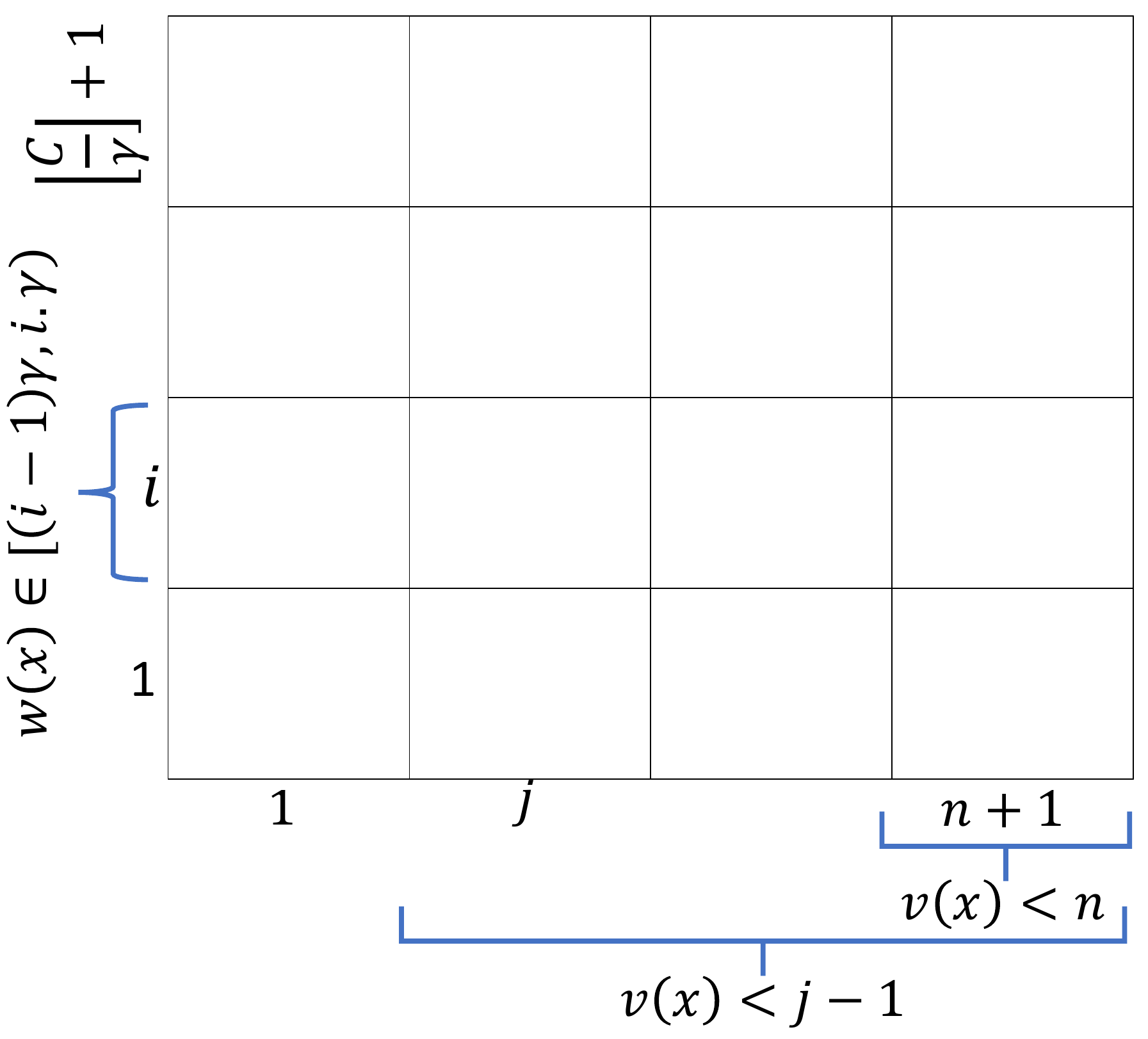}
\caption{Weight-based}\vspace{-5pt}
\label{fig:example_map1}
\end{subfigure}
\begin{subfigure}{.4\linewidth}
\centering
\includegraphics[width=0.8\columnwidth]{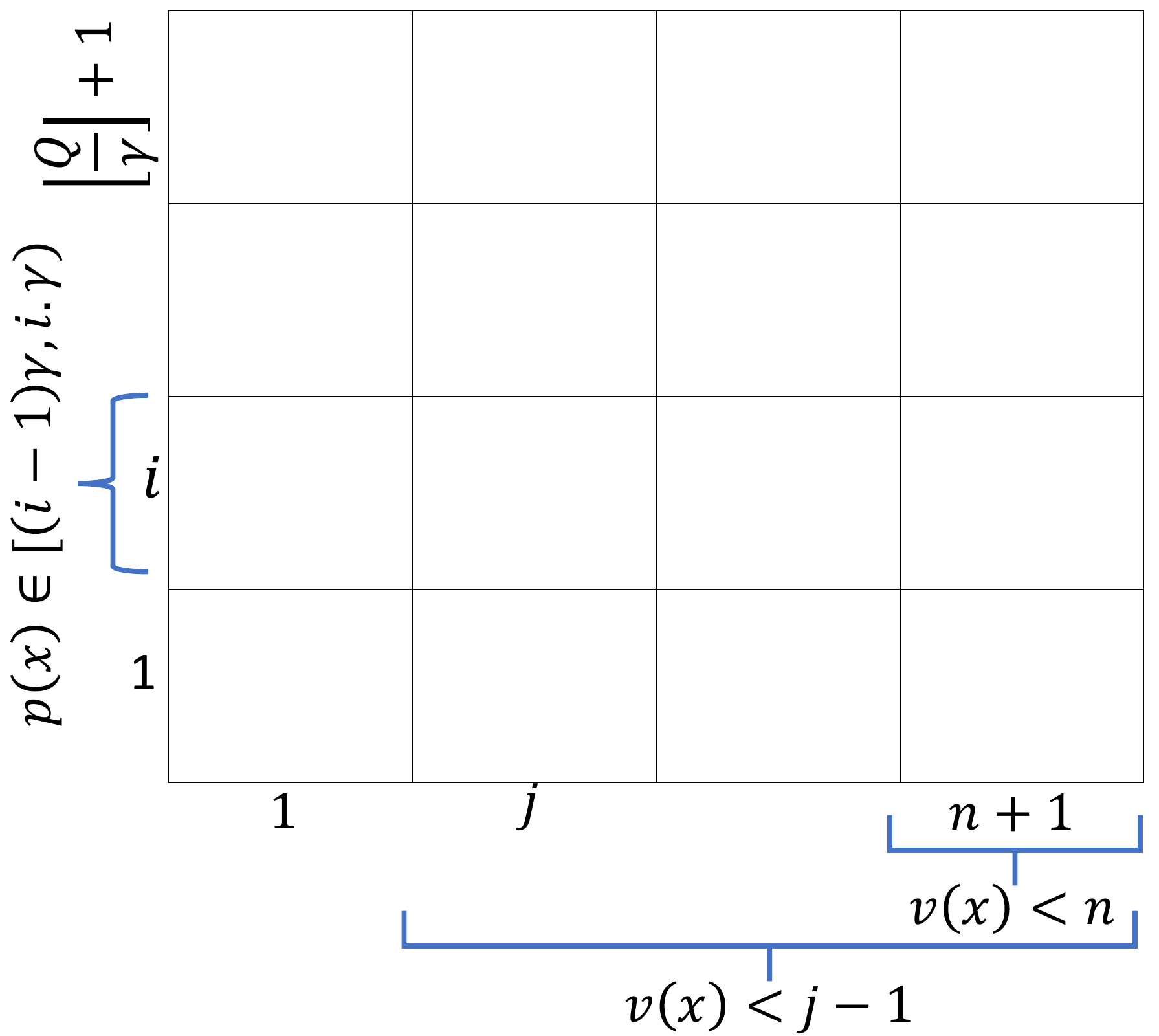}
\caption{Profit-based}\vspace{-5pt}
\label{fig:example_map2}
\end{subfigure}
\caption{The representation of the empty maps in the behavioral spaces.}\vspace{-10pt}
\label{fig:example_map}
\end{figure}
In this section, we introduce two MAP-Elite based algorithms exploring two different behavioral spaces. To determine behaviour of a solution in a particular space, a behaviour descriptor~(BD) is required. MAP-Elite is an EA, where a solution competes with other solutions with a similar BD. MAP-Elites discretizes a behavioural space into a grid to define the similarity and acceptable tolerance of difference in two descriptors. Each cell in the grid corresponds with a BD type, and only best solution with that particular BD is kept in the cell.

For KP, we formulate the behavioral spaces based on the two ways in which the classic dynamic programming approach is implemented \cite{DBLP:Toth80}, i.e. profit-based and weight-based sub-problem divisions. Let $v(x)$ be the function returning the index of the last item in solution $x$: $v(x)=\max_i\{i\mid x_i=1\}$.

\subsection{Weight-based space}
\label{SPC:One}
For the weight-based approach, $w(x)$ and $v(x)$ serve as the BD, where $w(x) = w^Tx$.
Figure \ref{fig:example_map1} outlines an empty map in the weight-based behavioural space. To exclude infeasible solutions, the weight dimension is restricted to $[0,C]$. As depicted, the behavioural space consists of $\left(\left\lfloor C/\gamma\right\rfloor+1\right) \times \left(n+1\right)$ cells, in which cell $(i,j)$ includes the best solution $x$ (i.e. maximizing $p(x) = p^Tx$) with $v(x)=j-1$ and $w(x) \in [(i-1)\gamma, i\gamma)$. Here, $\gamma$ is a factor to determine the size of each cell. The algorithm is initiated with a zero string $0^n$. Having a parent is selected uniformly at random from the population, we generate a single offspring  by standard flip mutation. If $w(x) \leq C$, we find the cell corresponding with the solution BD. We check the cells one by one. If the cells are empty, $x$ is store in the cell; otherwise, the solution with highest profit remains in the corresponding cell. These steps are continued until a termination criterion is met.

\subsection{Profit-based space}
\label{SPC:Two}
For the profit-based approach, $p(x)$ and $v(x)$ serve as the BD. Figure \ref{fig:example_map2} depicts the profit-based behavioural space with $\left(\left\lfloor Q/\gamma\right\rfloor+1\right) \times \left(n+1\right)$ cells where $Q = \sum_{i\in I}p_i$. Here, the selection in each cell minimizes the weight, and cell $(i,j)$ includes a solution $x$ with $v(x)=j-1$ and $p(x) \in [(i-1)\gamma, i\gamma)$. Otherwise, the parent selection and the operator are the same as in weight-based MAP-Elite. After generating the offspring, we determine the cell associating with the BD ($(v(x), p(x))$). If the cell are empty the solution is stored in the cells; otherwise, the solution with the lower weight $w(x)$ is kept in the cell. The steps are continued until a termination criterion is met.
\subsection{DP-based filtering scheme}
\label{sec:filtering}
In classical MAP-Elites, the competition between solutions is confined within each cell. However, in this context, the mapping from solution space to behaviour space is transparent enough in both cases that a dominance relation between solutions in different cells can be determined; a property exploited by the DP approach. Therefore, in order to reduce the population size and speed up the search for the optimum, we incorporate a filtering scheme that forms the core of the DP approach. Given solutions $x_1$ and $x_2$ with $v(x_1) \geq v(x_2)$ and $w(x_1) = w(x_2)$; then, $x_1$ dominates $x_2$ in the weight-based space if $p(x_1) > p(x_2)$. To filter out the dominated solutions, we relax the restriction that each BD corresponds to only one cell and redefine acceptable solutions for Cell $(i,j)$ in the weight-based space: $v(x) \leq j-1$ and $w(x) \in [(i-1)\gamma, i\gamma)$. This means a particular BD is acceptable for multiple cells, and MAP-Elite algorithms must check all the cells accepting the offspring. Algorithm \ref{alg:QD_weight} outlines the MAP-Elite algorithm exploring this space; this is referred to as weight-based MAP-Elites.
\begin{algorithm}[t!]
\begin{algorithmic}[1]
\REQUIRE{weights $\{w_i\}_{i=1}^n$,  $C$, profits $\{p_i\}_{i=1}^n$, $\gamma$}
\STATE $P \gets \{0^n\}$\COMM{$P$ is indexed from 1, $0^n$ is an all-zero string}
\STATE $A \gets 0_{n+1 \times \lfloor C/\gamma\rfloor +1}$\COMM{$0_{n+1 \times \lfloor C/\gamma\rfloor+1}$ is an all-zero matrix}
\STATE $B \gets 0$,
\WHILE{Termination criteria are not met}
\STATE $i\gets Uniform(\{1,\ldots,|P|\})$
\STATE Get $x$ from flipping each bit in $P(i)$ independently with probability $1/n$
\IF{$w(x) \leq C$}
\STATE $W' \gets \lfloor w(x)/\gamma \rfloor +1$
\IF{$A_{v(x) + 1, W'} = 0$}
\STATE $P\gets P\cup\{x\}$\COMM{$x$ is indexed last in $P$}
\STATE $A_{v(x) + 1, W'}\gets|P| $
\ELSIF{$p(x) > p(P(A_{v(x) + 1, W'}))$}
\STATE $P(A_{v(x) + 1, W'}) \gets x$
\ENDIF
\FOR[DP-based filtering scheme]{$j$ from $v(x) + 2$ to $n+1$}
\IF{$A_{j , W'} = 0$ Or $p(x) > p(P(A_{j, W'}))$}
\STATE $A_{j, W'} \gets A_{v(x) + 1, W'}$
\ENDIF
\ENDFOR
\IF{$p(x) > B$}
\STATE $B \gets p(x)$
\ENDIF
\ENDIF
\ENDWHILE
\RETURN{B}
\end{algorithmic}
\caption{weight-based MAP-Elites}
\label{alg:QD_weight}
\end{algorithm}

The same scheme can be applied to the profit-based space, where cell $(i, j)$ accepts solution $x$ with $v(x) \leq j-1$ and $p(x) \in [(i-1)\gamma, i\gamma)$. In this case, the dominance relation is formulated to minimise weight. Algorithm \ref{alg:QD_profit} sketches the profit-based MAP-Elites.
\begin{algorithm}[t!]
\begin{algorithmic}[1]
\REQUIRE{Weights $\{w_i\}_{i=1}^n$, $C$, profits $\{p_i\}_{i=1}^n$, $\gamma$} 

\STATE $P \gets \{0^n\}$\COMM{$P$ is indexed from 1, $0^n$ is an all-zero string}
\STATE $A \gets 0_{n+1 \times \sum_{i=1}^n p_i+1}$\COMM{$0_{n+1 \times \lfloor C/\gamma\rfloor+1}$ is an all-zero matrix}
\STATE $B \gets 0$,
\WHILE{Termination criteria are not met}
\STATE $i\gets Uniform(\{1,\ldots,|P| \})$
\STATE Get $x$ from flipping each bit in $P(i)$ independently with probability $1/n$
\STATE $G \gets \lfloor p(x)/\gamma \rfloor +1$
\IF{$A_{v(x) + 1, G} = 0$}
\STATE $P\gets P\cup\{x\}$\COMM{$x$ is indexed last in $P$}
\STATE $A_{v(x) + 1, G}\gets|P| $
\ELSIF{$w(x) < w(P(A_{v(x) + 1, G}))$}
\STATE $P(A_{v(x) + 1, G}) \gets x$
\ENDIF
\FOR[DP-based filtering scheme]{$j$ from $v(x) + 2$ to $n+1$}
\IF{$A_{j , G} = 0$ Or $w(x) < w(P(A_{j, G}))$}
\STATE $A_{j, G} \gets A_{v(x) + 1, G}$
\ENDIF
\ENDFOR
\IF{$w(x) \leq C$}
\IF{$p(x) > B$}
\STATE $B \gets p(x)$
\ENDIF
\ENDIF
\ENDWHILE
\RETURN{B}
\end{algorithmic}
\caption{profit-based MAP-Elites}
\label{alg:QD_profit}
\end{algorithm}

\section{Theoretical analysis} 
\label{sec:analysis}
In this section, we give some runtime results for Algorithm \ref{alg:QD_weight} and \ref{alg:QD_profit} based on expected time, as typically done in runtime analyses. Here, we use ``time'' as a shorthand for ``number of fitness evaluations'', which in this case equals the number of generated solutions during a run of the algorithm.
We define $a\wedge b$ and $a\vee b$ to be the bit-wise AND and bit-wise OR, respectively, between two equal length bit-strings $a$ and $b$. Also, we denote k-length all-zero and all-one bit-strings by $0^k$ and $1^k$, respectively. For convenience, we denote the $k$-size prefix of $a\in\{0,1\}^n$ with $a^{(k)}=a\wedge 1^k0^{n-k}$, and the $k$-size suffix with $a_{(k)}=a\wedge 0^{n-k}1^{k}$ .

It is important to note that in all our proofs, we consider solution $y$ replacing solution $x$ during a run to imply $v(y)\leq v(x)$. Since this holds regardless of whether filtering scheme outlined in Section \ref{sec:filtering} is used, our results should apply to both cases, as we use the largest possible upper bound of population size. Note that this filtering scheme may not reduce the population size in some cases.

We first show that with $\gamma=1$ (no scaling), Algorithm \ref{alg:QD_weight} ensures that prefixes of optimal solutions remain in the population throughout the run, and that these increase in sizes within a pseudo-polynomial expected time. For this result, we assume all weights are integers.

\begin{theorem}\label{theorem:weight_runtime}
Given $\gamma=1$ and $k\in[0,n]$, within expected time $e(C+1)n^2k$, Algorithm \ref{alg:QD_weight} achieves a population $P$ such that for any $j\in[0,k]$, there is an optimal solution $x^*$ where $x^{*(j)}\in P$.
\end{theorem}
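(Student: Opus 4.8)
The plan is to fix one optimal solution $x^*$ and argue, by induction on the prefix length $j$, that the population quickly acquires and then permanently retains a solution that is \emph{profit-optimal} for the sub-problem ``use only items $1,\dots,j$ and attain weight exactly $w(x^{*(j)})$''. Formally, for each $j$ let $E_j$ denote the event that $P$ contains some $y$ with $v(y)\le j$, $w(y)=w(x^{*(j)})$ and $p(y)=p(x^{*(j)})$. I will show (i) each $E_j$ is \emph{monotone} (once it holds it continues to hold), (ii) $E_0$ holds at initialisation since $0^n\in P$, (iii) a single accepted mutation turns $E_{j-1}$ into $E_j$, and (iv) any witness of $E_j$ is exactly a $j$-prefix of some optimal solution, which is precisely what the theorem asks for.

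The conceptual core is an exchange (optimal-substructure) argument. Writing $x^*=x^{*(j)}\vee x^*_{(n-j)}$ as the disjoint union of its $j$-prefix and the complementary suffix, I note that for any $y$ with $v(y)\le j$ the support of $y$ is disjoint from that of $x^*_{(n-j)}$, so $z=y\vee x^*_{(n-j)}$ satisfies $w(z)=w(y)+\big(w(x^*)-w(x^{*(j)})\big)$ and $p(z)=p(y)+\big(p(x^*)-p(x^{*(j)})\big)$. If moreover $w(y)=w(x^{*(j)})$ then $w(z)=w(x^*)\le C$, so feasibility of $z$ together with optimality of $x^*$ forces $p(y)\le p(x^{*(j)})$. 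Hence $p(x^{*(j)})$ is the maximum profit attainable in the class $\{y:v(y)\le j,\ w(y)=w(x^{*(j)})\}$, and whenever $p(y)=p(x^{*(j)})$ the solution $z$ is itself optimal with $z^{(j)}=y$; this proves (iv). Because a stored solution is only ever displaced by a strictly higher-profit solution of no larger index in the same weight column, a profit-optimal witness of $E_j$ can never be evicted, which gives the persistence and monotonicity claim (i).

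For the creation step (iii), suppose $E_{j-1}$ holds with witness $y$, so $v(y)\le j-1$, $w(y)=w(x^{*(j-1)})$, $p(y)=p(x^{*(j-1)})$, and in particular bit $j$ of $y$ is $0$. If $x^*_j=0$ then $w(x^{*(j)})=w(x^{*(j-1)})$ and $p(x^{*(j)})=p(x^{*(j-1)})$, so $y$ already witnesses $E_j$ and no further time is needed. If $x^*_j=1$, selecting $y$ as parent and flipping exactly the $j$-th bit yields $y'$ with $w(y')=w(y)+w_j=w(x^{*(j)})\le w(x^*)\le C$ and $p(y')=p(y)+p_j=p(x^{*(j)})$, a profit-optimal witness of $E_j$ that is accepted into its cell. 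The probability of selecting a fixed parent and flipping one prescribed bit while leaving the rest fixed is at least $\tfrac1{|P|}\cdot\tfrac1n\big(1-\tfrac1n\big)^{n-1}\ge\tfrac1{|P|}\cdot\tfrac1{en}$, so the expected number of iterations for this step is at most $en\,|P|$.

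It remains to bound $|P|$ and assemble the pieces. Since all weights are positive, the weight-$0$ column can hold only $0^n$, while each of the at most $C$ remaining weight columns pairs with indices $v\in\{1,\dots,n\}$; as each element of $P$ is created in a distinct home cell, this gives $|P|\le nC+1\le n(C+1)$, so each creation step costs at most $en\cdot n(C+1)=e(C+1)n^2$ expected iterations. Because every $E_j$ is monotone, I may realise the levels sequentially, waiting for $E_1$ (whose persistent parent is the witness $0^n$ of $E_0$), then for $E_2$, and so on up to $E_k$; after the final wait all of $E_0,\dots,E_k$ hold simultaneously, and by linearity of expectation the total expected time is at most $\sum_{j=1}^{k}e(C+1)n^2=e(C+1)n^2k$, with (iv) translating each $E_j$ into ``$x^{*(j)}\in P$ for some optimal $x^*$''. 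I expect the main obstacle to be making the persistence argument (i) airtight under the filtering scheme: one must verify that the replacement rule, as abstracted by the remark that a replacing solution never increases $v$, really prevents a profit-optimal prefix from being ousted by ties or through cells that share an index under filtering.
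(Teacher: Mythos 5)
Your proof is correct and follows essentially the same route as the paper's: the same exchange argument $z=y\vee x^*_{(n-j)}$ establishes both that witnesses persist and that they are prefixes of optimal solutions, the same single-bit-flip event drives the level-by-level progress, and the same bound $\max_h|P_h|\leq (C+1)n$ closes the computation. Your reformulation of the invariant as profit-optimality of the DP sub-problem cell (rather than the paper's non-emptiness of $S_j\cap P_t$) is an equivalent, slightly more explicit piece of bookkeeping, and the residual worry you flag about evictions under the filtering scheme is exactly what the paper dispatches by fiat with its standing remark that any replacement satisfies $v(y)\leq v(x)$.
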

\begin{proof}
Let $P_t$ be the population at iteration $t\geq0$, $S$ be the set of optimal solutions, $S_j=\{s^{(j)}\mid s\in S\}$, $X_t=\max\{h\mid \forall j\in[0,h],S_j\cap P_t\neq\emptyset\}$, and $H(x,y)$ be the Hamming distance between $x$ and $y$, we have $S_n=S$. We see that for any $j\in[0,X_t]$, any $x\in S_j\cap P_t$ must be in $P_{>t}$, since otherwise, let $y$ be the solution replacing it, and $y^*=y\vee x^*_{(n-j)}$ for any $x^*\in S$ where $x=x^{*(j)}$, we would have $p(y^*)-p(x^*)=p(y)-p(x)>0$ and $w(y^*)=w(x^*)\leq B$, a contradiction. Additionally, if $x\in S_i\cap S_j$ for any $0\leq i<j\leq n$, then $x\in\bigcap_{h=i}^jS_h$. Thus, if $X_t<n$, then $S_{X_t}\cap S\cap P_t=\emptyset$, so for all $x\in S_{X_t}\cap P_t$, there is $y\in S_{>X_t}$ such that $H(x,y)=1$. We can then imply from the algorithm's behaviour that for any $j\in[0,n-1]$, $Pr[X_{t+1}<j\mid X_{t}=j]=0$ and

\[Pr[X_{t+1}>j\mid X_{t}=j]\geq\frac{1}{n}\left(1-\frac{1}{n}\right)^{n-1}\frac{|S_{X_t}\cap P_t|}{|P_t|}\geq\frac{1}{en\max_h|P_h|}.\]

Let $T$ be the minimum integer such that $X_{t+T}>X_t$, then the expected waiting time in a binomial process gives $E[T\mid X_t<j]\leq en\max_h|P_h|$ for any $j\in[1,n]$. Let $T_k$ be the minimum integer such that $X_{T_k}\geq k$, we have for any $k\in[0,n]$, $E[T_k]\leq\sum_{i=1}^kE[T\mid X_t<i]\leq en\max_h|P_h|k$, given that $0^n\in S_0\cap P_0$. Applying the bound $\max_h|P_h|\leq (C+1)n$ yields the claim.
\end{proof}

We remark that with $\gamma>1$, Algorithm \ref{alg:QD_weight} may fail to maintain prefixes of optimal solutions during a run, due to rounding error. That is, assuming there is $x=x^{*(j)}\in P_t$ at step $t$ and for some $j\in[0,n]$ and optimal solution $x^*$, a solution $y$ may replace $x$ if $p(y)>p(x)$ and $w(y)<w(x)+\gamma$. It is possible that $y^*=y\vee x^*_{(n-j)}$ is infeasible (i.e. when $C<w(x^*)+\gamma$), in which case the algorithm may need to ``fix'' $y$ with multiple bit-flips in one step.
The expected runtime till optimality can be derived directly from Theorem \ref{theorem:weight_runtime} by setting $k=n$.
\begin{corollary}
Algorithm \ref{alg:QD_weight}, run with $\gamma=1$, finds an optimum within expected time $e(C+1)n^3$.
\end{corollary}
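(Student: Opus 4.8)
The plan is to read off the result as the special case $k=n$ of Theorem~\ref{theorem:weight_runtime}, so the substantive work is only to check that the population property at $k=n$ certifies that the returned value $B$ equals the optimal profit. Substituting $k=n$ into the bound $e(C+1)n^2k$ immediately gives the stated expected time $e(C+1)n^3$, as the text preceding the statement already indicates.

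First I would note that, by the definition of the prefix operator, $x^{*(n)}=x^*\wedge 1^n0^0=x^*$ for every optimal solution $x^*$. Hence, taking $j=n$ in the conclusion of Theorem~\ref{theorem:weight_runtime}, the guaranteed property states that some \emph{full} optimal solution $x^*$ itself belongs to the population $P$ once the time bound is reached. Next I would link membership in $P$ to the output $B$: for $x^*$ to appear in $P$ it must have been produced as an offspring at some iteration, and since $x^*$ is optimal it is feasible, i.e. $w(x^*)\leq C$. At that iteration the feasibility branch of Algorithm~\ref{alg:QD_weight} executes, and because $p(x^*)$ is the maximum profit attainable by any feasible solution, the update sets $B=p(x^*)$. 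As $B$ is nondecreasing and only ever records profits of feasible (hence at most optimal) solutions, $B$ already equals the optimal profit by the time the population property holds; taking expectations over the time at which that property is first achieved yields $e(C+1)n^3$.

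There is no genuinely hard step here: the corollary is a direct instantiation of the theorem, so the only point deserving a line of care is the bookkeeping that the event ``$x^*\in P$'' forces ``$B=p(x^*)$''. This follows from the monotonicity of $B$ together with the fact that any string entering $P$ was first generated as an offspring and tested for feasibility, so I would make sure to state that inference explicitly rather than leaving it implicit in the appeal to Theorem~\ref{theorem:weight_runtime}.
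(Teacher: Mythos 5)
Your proposal is correct and matches the paper's approach exactly: the paper also obtains this corollary by setting $k=n$ in Theorem~\ref{theorem:weight_runtime} (noting $x^{*(n)}=x^*$), and gives no further proof. Your extra bookkeeping that $x^*\in P$ forces $B=p(x^*)$ via the feasibility branch and monotonicity of $B$ is a reasonable explicit addition that the paper leaves implicit.
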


Using the notation $Q=\sum_{i=1}^np_i$, we have the following result for Algorithm \ref{alg:QD_profit}, which is analogous to Theorem \ref{theorem:weight_runtime} for Algorithm \ref{alg:QD_weight}.

\begin{theorem}\label{theorem:profit_runtime}
Given $k\in[0,n]$, and let $z$ be an optimal solution, within expected time $e\left(\left\lfloor Q/\gamma\right\rfloor+1\right)n^2k$, Algorithm \ref{alg:QD_profit} achieves a population $P$ such that, if $\gamma>0$ is such that $p_i/\gamma$ is integer for every item $i$ in $z$, then for any $j\in[0,k]$, there is a feasible solution $x$ where
\begin{itemize}
\item there is an integer $m$ such that $p(x^{(j)}),p(z^{(j)})\in[m\gamma,(m+1)\gamma)$,
\item $x_{(n-j)}=z_{(n-j)}$,
\item $x^{(j)}\in P$.
\end{itemize}
Moreover, for other $\gamma$ values, the first property becomes
$p(x^{(j)}),p(z^{(j)})\in[m\gamma,(m+j+1)\gamma).$
\end{theorem}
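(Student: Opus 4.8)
The plan is to mirror the structure of the proof of Theorem \ref{theorem:weight_runtime}, replacing the ``maximise profit at fixed weight'' invariant by a ``minimise weight at fixed profit cell'' one, and to construct the required $x$ from a population member that stores a compatible prefix. For each $j\in[0,k]$ I would track the event $A_j$ that $P_t$ contains a solution $s$ with $v(s)\le j$ and $w(s)\le w(z^{(j)})$ whose profit cell is compatible with that of $z^{(j)}$, i.e. there is an integer $m$ with $p(s),p(z^{(j)})\in[m\gamma,(m+j+1)\gamma)$, which tightens to $[m\gamma,(m+1)\gamma)$ when every $p_i/\gamma$ is integral. Given such an $s$, set $x=s\vee z_{(n-j)}$; since $v(s)\le j$ yields $s^{(j)}=s$ and $s_{(n-j)}=0$, we get $x^{(j)}=s\in P$ and $x_{(n-j)}=z_{(n-j)}$, while $w(x)=w(s)+w(z_{(n-j)})\le w(z^{(j)})+w(z_{(n-j)})=w(z)\le C$, so $x$ is feasible and meets all three bullet points. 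Thus it suffices to show that $A_0,\dots,A_k$ all hold within the stated expected time.

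First I would verify persistence (monotonicity): once $A_j$ holds it keeps holding. This is where the weight-minimising update is used. If the witness $s$ is ever replaced, the algorithm retains in the same profit cell a solution of strictly smaller weight, and by the standing convention that a replacing solution $y$ satisfies $v(y)\le v(x)$, the replacement still obeys $v\le j$ and $w\le w(z^{(j)})$ and lies in a cell contained in $[m\gamma,(m+j+1)\gamma)$. Hence $X_t:=\max\{h\mid \forall j\in[0,h],\,A_j\text{ holds}\}$ is non-decreasing, with $A_0$ holding at $t=0$ since $0^n\in P_0$ and $z^{(0)}=0^n$.

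Next I would lower-bound the one-step probability of increasing $X_t$. If $X_t=j<k$ and $z_{j+1}=0$ then $A_{j+1}$ follows from $A_j$ with the same witness and a widened interval, so assume $z_{j+1}=1$. Selecting the witness $s$ of $A_j$ (probability $\ge 1/|P_t|$) and flipping exactly bit $j+1$ (probability $\tfrac{1}{n}(1-\tfrac{1}{n})^{n-1}\ge\tfrac{1}{en}$) produces the offspring $x'$ obtained from $s$ by setting bit $j+1$, for which $w(x')=w(s)+w_{j+1}\le w(z^{(j+1)})$ and $v(x')=j+1$; tracking the cell index shows $p(x')$ and $p(z^{(j+1)})$ share an interval of width $(j+2)\gamma$ (and of width $\gamma$ in the integral case, where the cell shifts by exactly $p_{j+1}/\gamma$). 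After $x'$ is processed, its cell stores a solution witnessing $A_{j+1}$, so $Pr[X_{t+1}>j\mid X_t=j]\ge 1/(en\max_h|P_h|)$. The waiting-time and summation argument of Theorem \ref{theorem:weight_runtime}, with the population bound $\max_h|P_h|\le(\lfloor Q/\gamma\rfloor+1)n$, then gives $E[T_k]\le e(\lfloor Q/\gamma\rfloor+1)n^2k$.

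The main obstacle is controlling the profit-cell rounding in the non-integral case: unlike the weight-based setting where $\gamma=1$ makes prefix weights land exactly, here appending item $j+1$ can move $s$'s cell and $z^{(j+1)}$'s cell by different integer amounts, so the two prefixes need not coincide in a cell. The crux is to show this discrepancy grows by at most one cell per appended item, so that after $j$ appends both prefixes still lie within a window of $j+1$ cells; proving that this accumulated-drift invariant survives both the bit-flip step and the weight-minimising replacement is what forces the $[m\gamma,(m+j+1)\gamma)$ form of the first property. A secondary point to handle carefully is that feasibility of $x$ is never inherited from $s$ alone but only from the maintained inequality $w(s)\le w(z^{(j)})$ together with weight-minimisation inside each cell.
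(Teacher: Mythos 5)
Your proposal is correct and follows essentially the same route as the paper's own proof: the invariant maintained by weight-minimising replacement (shown via the $y\vee z_{(n-j)}$ construction), progress by a single bit-flip of position $j+1$ when $z_{j+1}=1$, the at-most-one-cell-per-item drift bound yielding the $[m\gamma,(m+j+1)\gamma)$ window in the non-integral case, and the waiting-time argument with $\max_h|P_h|\leq\left(\left\lfloor Q/\gamma\right\rfloor+1\right)n$ are all the same ingredients. The only cosmetic difference is that you phrase the invariant in terms of the stored prefix $s$ with the explicit bound $w(s)\leq w(z^{(j)})$, whereas the paper tracks feasibility of the completed solution $x$ directly; these are interchangeable.
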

\begin{proof}
The proof proceeds similarly as that of Theorem \ref{theorem:weight_runtime}. We have the claim holds for $k=0$ since the empty set satisfies the properties for $j=0$ (i.e. $x$ and $z$ would be the same). For other $k$ values, it suffices to show that if there is such a solution $x$ for some $j\in[0,k]$: \begin{enumerate*}[label=\arabic*)]
\item any solution $y$ replacing $x^{(j)}$ in a future step must be the $j$-size prefix of another solution with the same properties, and
\item at most one bit-flip is necessary to have it also hold for $j+1$.
\end{enumerate*}
\begin{enumerate}[label=\arabic*),wide=0pt]
\item Let $y$ be the solution replacing $x^{(j)}$, we have $p(y),p(x^{(j)})\in [m\gamma,(m+1)\gamma)$ for some integer $m$, and $w(y)<w(x^{(j)})$. Let $y^*=y\vee z_{(n-j)}$, we have $p(y),p(z^{(j)})\in[m\gamma,(m+1)\gamma)$, and $w(y^*)-w(x)=w(y)-w(x^{(j)})<0$, implying $y^*$ is feasible. Therefore, $y^*$ possess the same properties as $x$. Note that this also holds for the case where $p(x^{(j)}),p(z^{(j)})\in[m\gamma,(m+j+1)\gamma)$. In this case, $p(y)\in[m\gamma,(m+j+1)\gamma)$.
\item If this also holds for $j+1$, no further step is necessary. Assuming otherwise, then $z$ contains item $j+1$, the algorithm only needs to flip the position $j+1$ in $x^{(j)}$, since $x$ and $z$ shares $(n-j-1)$-size suffix, and the $p_{j+1}$ is a multiple of $\gamma$. Since this occurs with probability at least $1/en\max_h|P_h|$, the rest follows identically, save for $\max_h|P_h|\leq\left(\left\lfloor Q/\gamma\right\rfloor+1\right)n$. If $p_{j+1}$ is a not multiple of $\gamma$, then $p(x^{(j+1)})$ may be mapped to a different profit range from $p(z^{(j+1)})$. The difference is increased by at most 1 since $p(x^{(j+1)})-p(x^{(j)})=p(z^{(j+1)})-p(z^{(j)})$, i.e. if $p(x^{(j)}),p(z^{(j)})\in[m\gamma,(m+j+1)\gamma)$ for some integer $m$, then $p(x^{(j+1)}),p(z^{(j+1)})\in[m'\gamma,(m'+j+2)\gamma)$ for some integer $m'\geq m$. Since $x$ can be replaced in a future step by another solution with a smaller profit due to rounding error, the difference can still increase, so the claim holds non-trivially.
\end{enumerate}
\end{proof}
Theorem \ref{theorem:profit_runtime} gives us the following profit guarantees of Algorithm \ref{alg:QD_profit} when $k=n$. Here $OPT$ denotes the optimal profit.
\begin{corollary}
Algorithm \ref{alg:QD_profit}, run with $\gamma>0$, within expected time $e\left(\left\lfloor Q/\gamma\right\rfloor+1\right)n^3$ obtains a feasible solution $x$ where $p(x)=OPT$ if $p_i/\gamma$ is integer for all $i=1,\ldots,n$, and $p(x)>OPT-\gamma n$ otherwise.
\end{corollary}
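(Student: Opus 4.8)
The plan is to specialise Theorem \ref{theorem:profit_runtime} to the case $k=n$ and then split on whether the scaling renders all profits integral. Setting $k=n$ immediately yields the claimed expected time $e(\lfloor Q/\gamma\rfloor+1)n^3$ and guarantees a population $P$ containing a feasible solution $x=x^{(n)}$ for which the profit of $x$ and of an optimum $z$ lie in a common family of profit ranges (the $j=n$ instance of the first bullet). Since $x$ is feasible we always have $p(x)\le OPT=p(z)$, so the whole task reduces to upper-bounding the gap $OPT-p(x)$.

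For the exact case, suppose $p_i/\gamma$ is an integer for every $i$; this in particular holds for every item of $z$, so the first bullet of Theorem \ref{theorem:profit_runtime} applies and gives an integer $m$ with $p(x),OPT\in[m\gamma,(m+1)\gamma)$. Because every $p_i$ is a multiple of $\gamma$, both $p(x)$ and $OPT$ are integer multiples of $\gamma$; the only multiple of $\gamma$ in the half-open interval $[m\gamma,(m+1)\gamma)$ is $m\gamma$ itself, forcing $p(x)=OPT=m\gamma$, so $x$ is optimal.

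For general $\gamma$ I would invoke the second form of the first property, $p(x),OPT\in[m\gamma,(m+n+1)\gamma)$, but read the gap through the rounding-error accounting that drives the theorem's induction rather than through the raw interval width. The range only widens when a prefix step incorporates an item of $z$ whose profit is not a multiple of $\gamma$, and each such incorporation enlarges the $x$-versus-$z$ profit discrepancy by strictly less than $\gamma$. Since $z$ contains at most $n$ items and the discrepancy starts at $0$, the accumulated gap satisfies $OPT-p(x)<\gamma n$, i.e.\ $p(x)>OPT-\gamma n$.

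The main obstacle is precisely this last accounting: the interval stated in Theorem \ref{theorem:profit_runtime} has width $(n+1)\gamma$, which read naively would only give $p(x)>OPT-(n+1)\gamma$, one $\gamma$ short. Closing the gap requires observing that the initial range $[m\gamma,(m+1)\gamma)$ at $j=0$ reflects both prefixes sitting at profit $0$ with discrepancy exactly $0$ (not merely $<\gamma$), and that thereafter the true discrepancy grows by less than $\gamma$ only on the at-most-$n$ optimal items, so the genuine bound on $OPT-p(x)$ is $\gamma n$ rather than $(n+1)\gamma$. Checking that this refined per-item bound survives the possibility, flagged at the end of the theorem's proof, that $x$ is later replaced by a lower-profit solution due to rounding is the one place needing care; the remainder is bookkeeping.
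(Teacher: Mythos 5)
Your proof is correct and follows the paper's high-level route: specialise Theorem~\ref{theorem:profit_runtime} to $k=n$, use feasibility to get $p(x)\leq OPT$, and split on whether all $p_i/\gamma$ are integers. The exact case is handled identically in both arguments (two multiples of $\gamma$ lying in a common half-open interval of length $\gamma$ must coincide). Where you diverge is the approximate case, and the extra care you invest there is not optional: the paper's proof simply reads $p(x)>OPT-\gamma n$ off the containment $p(x),OPT\in[m\gamma,(m+n+1)\gamma)$, but that containment by itself only yields $OPT-p(x)<(n+1)\gamma$, one $\gamma$ short of the stated bound. Your per-item accounting --- the discrepancy starts at exactly $0$ for the empty prefixes, is unaffected by in-cell replacements (which keep the tracked prefix in the same width-$\gamma$ profit cell and hence above its lower endpoint), and grows by $(p_{j+1}\bmod\gamma)<\gamma$ only at the at most $n$ positions where $z$ contains an item whose profit is not a multiple of $\gamma$ --- is precisely what is needed to recover $OPT-p(x)<\gamma n$. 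So your proposal is not merely equivalent to the paper's proof; it closes a small off-by-one gap that the paper's own two-line justification leaves open. The one point you flag as delicate (later replacement of the tracked prefix by a lower-profit solution) is indeed benign for exactly the reason you give: such a replacement cannot change the cell index, so the anchor $m\gamma$ of your discrepancy bound is unaffected.
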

\begin{proof}
If $p_i/\gamma$ is integer for all $i=1,\ldots,n$, then $|p(a)-p(b)|$ is a multiple of $\gamma$ for any solutions $a$ and $b$. Since by Theorem \ref{theorem:weight_runtime}, $x$ is feasible and $p(x)>OPT-\gamma$, it must be that $p(x)=OPT$.
For the other case, Theorem \ref{theorem:weight_runtime} implies that $p(x),OPT\in[m\gamma,(m+n+1)\gamma)$ for some integer $m$. This means $p(x)>OPT-\gamma n$.
\end{proof}

Using this property, we can set up a FPRAS with an appropriate choice of $\gamma$, which is reminiscent of the classic FPTAS for KP based on DP. As a reminder, $x$ is a $(1-\epsilon)$-approximation for some $\epsilon\in(0,1)$ if $p(x)\geq(1-\epsilon)OPT$.
The following corollary is obtained from the fact that $Q\leq n\max_i\{p_i\}$, and $\max_i\{p_i\}\leq OPT$. 
\begin{corollary}
For some $\epsilon\in(0,1)$, Algorithm \ref{alg:QD_profit}, run with $\gamma=\epsilon\max_i\{p_i\}/n$, obtains a $(1-\epsilon)$-approximation within expected time $e\left(\left\lfloor n^2/\epsilon\right\rfloor+1\right)n^3$.
\end{corollary}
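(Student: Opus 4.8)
The plan is to derive the final corollary as a direct instantiation of the preceding corollary, which already provides the profit guarantee $p(x) > OPT - \gamma n$ and the expected runtime $e\left(\left\lfloor Q/\gamma\right\rfloor+1\right)n^3$ for arbitrary $\gamma > 0$. The task splits cleanly into two parts: verifying that the chosen scaling factor $\gamma = \epsilon\max_i\{p_i\}/n$ turns the additive guarantee into the multiplicative $(1-\epsilon)$-approximation, and bounding the runtime expression to the claimed $e\left(\left\lfloor n^2/\epsilon\right\rfloor+1\right)n^3$.

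First I would handle the approximation quality. Substituting $\gamma = \epsilon\max_i\{p_i\}/n$ into the bound $p(x) > OPT - \gamma n$ gives $p(x) > OPT - \epsilon\max_i\{p_i\}$. The key observation, stated in the excerpt, is that $\max_i\{p_i\}\leq OPT$, since any single item is itself a feasible solution (recall all weights lie in $(0,C]$). Hence $p(x) > OPT - \epsilon\,OPT = (1-\epsilon)OPT$, which is exactly the $(1-\epsilon)$-approximation condition. This step is immediate and forms the conceptual heart of why the scaling works.

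Next I would bound the runtime. Starting from $e\left(\left\lfloor Q/\gamma\right\rfloor+1\right)n^3$ and using $Q\leq n\max_i\{p_i\}$, I substitute $\gamma = \epsilon\max_i\{p_i\}/n$ to obtain
\[
\left\lfloor \frac{Q}{\gamma}\right\rfloor \leq \left\lfloor \frac{n\max_i\{p_i\}}{\epsilon\max_i\{p_i\}/n}\right\rfloor = \left\lfloor \frac{n^2}{\epsilon}\right\rfloor,
\]
so the runtime is at most $e\left(\left\lfloor n^2/\epsilon\right\rfloor+1\right)n^3$, matching the claim. Finally, I would note that this bound is polynomial in both $n$ and $1/\epsilon$, which is precisely the defining property of a fully polynomial randomised approximation scheme, justifying the FPRAS terminology introduced just before the statement.

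I do not anticipate a genuine obstacle here, since the corollary is essentially a specialization; the only point requiring care is the monotonicity of the floor function used in the runtime bound, namely that $Q/\gamma \leq n^2/\epsilon$ implies $\lfloor Q/\gamma\rfloor \leq \lfloor n^2/\epsilon\rfloor$, which holds because $\lfloor\cdot\rfloor$ is nondecreasing. The slightly subtle conceptual point worth stating explicitly is the inequality $\max_i\{p_i\}\leq OPT$, as it is what converts the absolute error guarantee into a relative one.
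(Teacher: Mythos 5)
Your proposal is correct and follows exactly the paper's own (one-sentence) justification: the paper derives this corollary from the preceding one using precisely the two facts $Q\leq n\max_i\{p_i\}$ and $\max_i\{p_i\}\leq OPT$, which are the same substitutions you carry out. Your added remark that $\max_i\{p_i\}\leq OPT$ holds because every single item is feasible under the assumption $w_i\in(0,C]$ is a worthwhile explicit justification of a fact the paper simply asserts.
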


For comparison, the asymptotic runtime of the classic FPTAS achieving the same approximation guarantee is $O(n^2\lfloor n/\epsilon\rfloor)$ \cite{Vijay2001}.

\section{Experimental investigations}
\label{sec:experiments}

In this section, we experimentally examine the two MAP-Elite based algorithms. The experiments can be categorised in three sections. First, we illustrate the distribution of high-performing solutions in the two behavioural spaces. Second, we compare Algorithm~\ref{alg:QD_weight} and \ref{alg:QD_profit} in terms of population size and ratio in achieving the optimums over 30 independent runs. Finally, we compare between the best MAP-Elite algorithm and two baseline EAs, namely $(1+1)$EA and $(\mu+1)$EA. These baselines are selected due the the same size of offspring in each iteration. For the first round of experiments, three instances from \cite{PolyakovskiyB0MN14} are considered. There is a strong correlation between the weight and profit of each items in the first instance. The second and third instances are not correlated, while the items have similar weights in third instance. The termination criterion is set to the maximum fitness evaluations of $Cn^2$. We also set $\gamma\in\{1, 5, 25\}$. For the second and third rounds of experiments, we run algorithms on $18$ test instances from \cite{PolyakovskiyB0MN14}, and change the termination criterion to either achieving the optimal value or the maximum CPU-time of $7200$ seconds.   

\begin{figure}[t]
\centering
    \centering
    \includegraphics[width=.8\columnwidth]{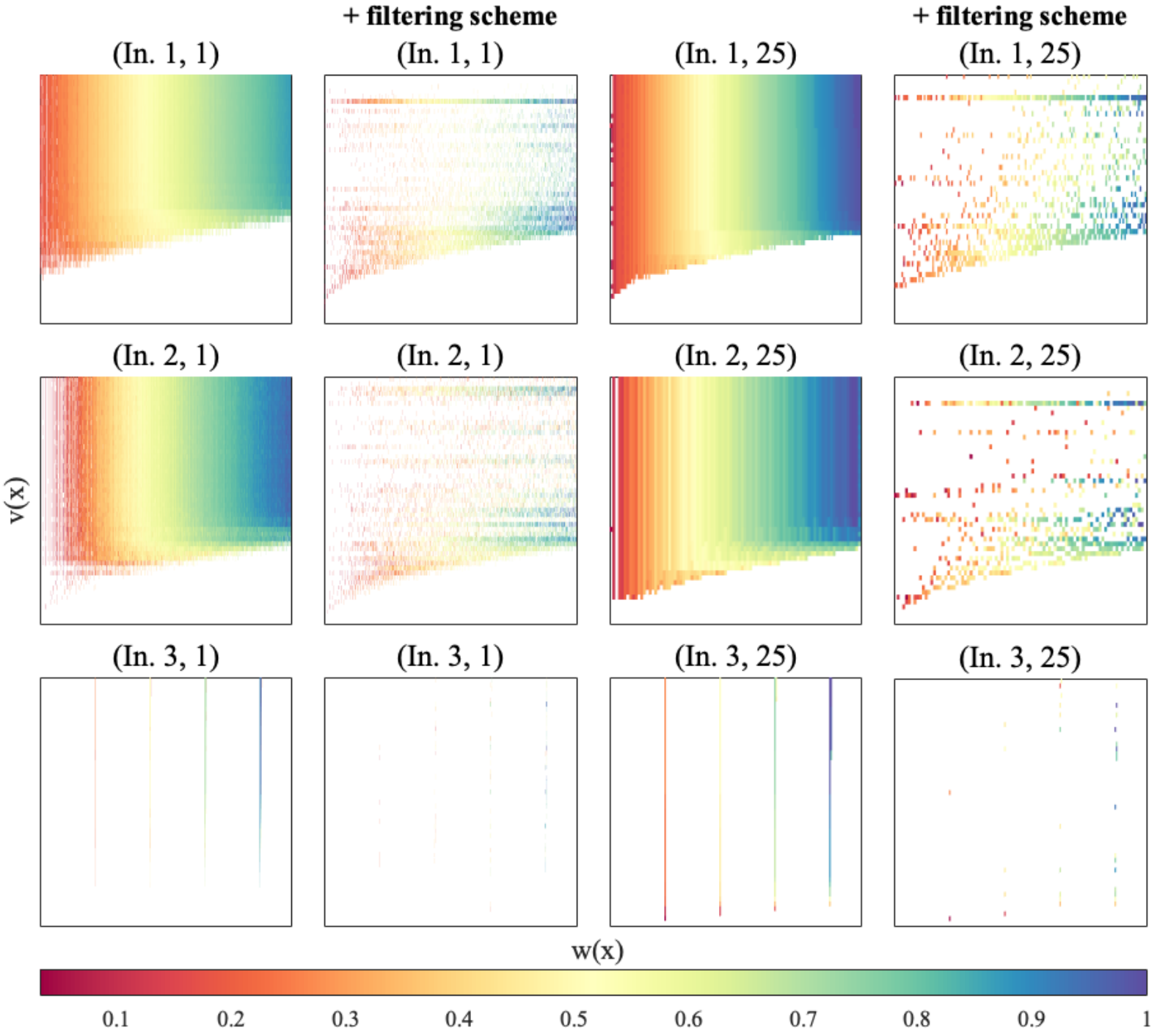}\vspace{-10pt}
    \caption{The distribution of high-performing solutions in the weight-based behavioral space. The title of sub-figures show (Ints. No, $\gamma$). Colors are scaled to OPT.}\vspace{-15pt}
    \label{fig:w(x)}
\end{figure}
\begin{figure}[t]
\centering
    \centering
    \includegraphics[width=.8\columnwidth]{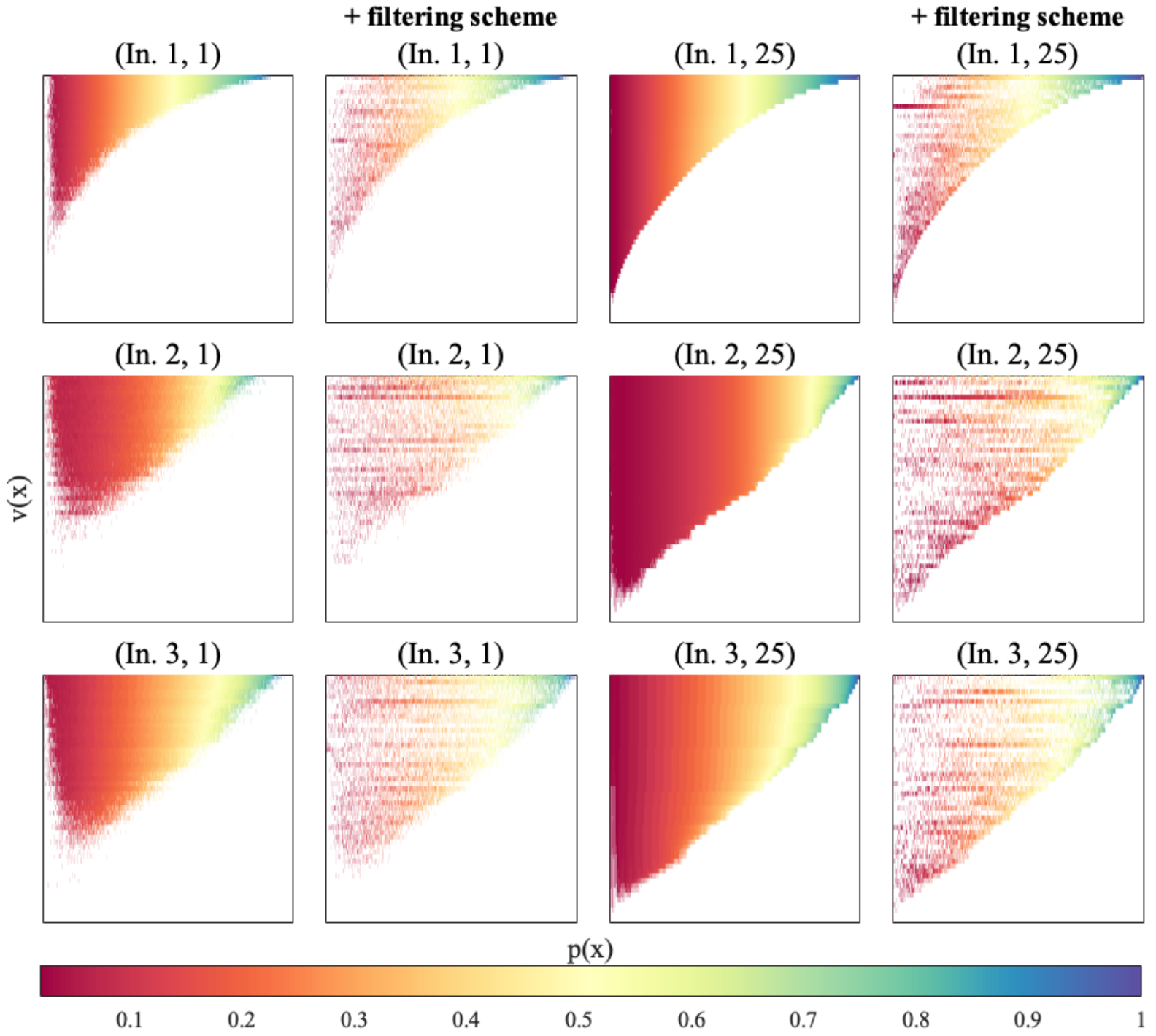}\vspace{-10pt}
    \caption{The distribution of high-performing solutions in the profit-based behavioral space. Analogous to Figure \ref{fig:w(x)}.}\vspace{-15pt}
    \label{fig:p(x)}
\end{figure}
Figure \ref{fig:w(x)} illustrates the high-performing solutions obtained by  Algorithm \ref{alg:QD_weight} in the weight-based space. As shown on the figure, the best solutions can be found the right top of the space. One can expected it since in that area of the space, solutions get to involve most items and the most of the knapsack's capacity, whereby on the left bottom of the space a few items and a small proportion of $C$ can be used. Algorithm \ref{alg:QD_weight} can successfully populates the most of the space in instance $1$, $2$, while we can see most of the space is empty in instance $3$. This is because the weights are uniformly distributed within $[1000, 1010]$, while $C$ is set to $4567$. As shown on the figure, the feasible solutions can only pick $4$ items. Figure \ref{fig:w(x)} also shows that the DP-based filtering removes many dominated solutions that contribute to convergence rate and pace of the algorithm.  

Figure \ref{fig:p(x)} shows the best-performing solutions obtained by  Algorithm \ref{alg:QD_profit} in the profit-based space. It can be observed that we can only populate the half of space by Algorithm~\ref{alg:QD_profit} or any other algorithm. To have a solution with profit of $Q$, the solutions needs to pick all items. This means that it is impossible to populate any other cells except cell $(n+1), (Q+1)$. On the contrary of the weight-based space, we can have both feasible and infeasible solutions in the profit-based space. For example, the map is well populated in instance $3$, but mostly contains infeasible solution. Figure \ref{fig:trd} depicts the trajectories of population size of Algorithm~\ref{alg:QD_weight} and \ref{alg:QD_profit}. The figure shows Algorithm~\ref{alg:QD_weight} results in significantly smaller $|P|$ than  Algorithm~\ref{alg:QD_profit}. For example, the final population size of Algorithm~\ref{alg:QD_weight} is equal to $37$ in instance $3$, where $\gamma = 25$, while it is around $9000$ for Algorithm \ref{alg:QD_profit}. This is because we can limit the first space to the promising part of it ($w(x)\leq W$), but we do not have the similar advantage for the profit-based space; the space accept the full range of possible profits ($p(x)\leq Q$). We believe this issue can cause an adverse effect on the efficiency of MAP-Elites in reaching optimality, based on theoretical observations. This is explored further in our second experiment, where we look at the actual run-time to achieve the optimum. 

\begin{figure}[t]
\centering
    \centering
    \includegraphics[width=.8\columnwidth]{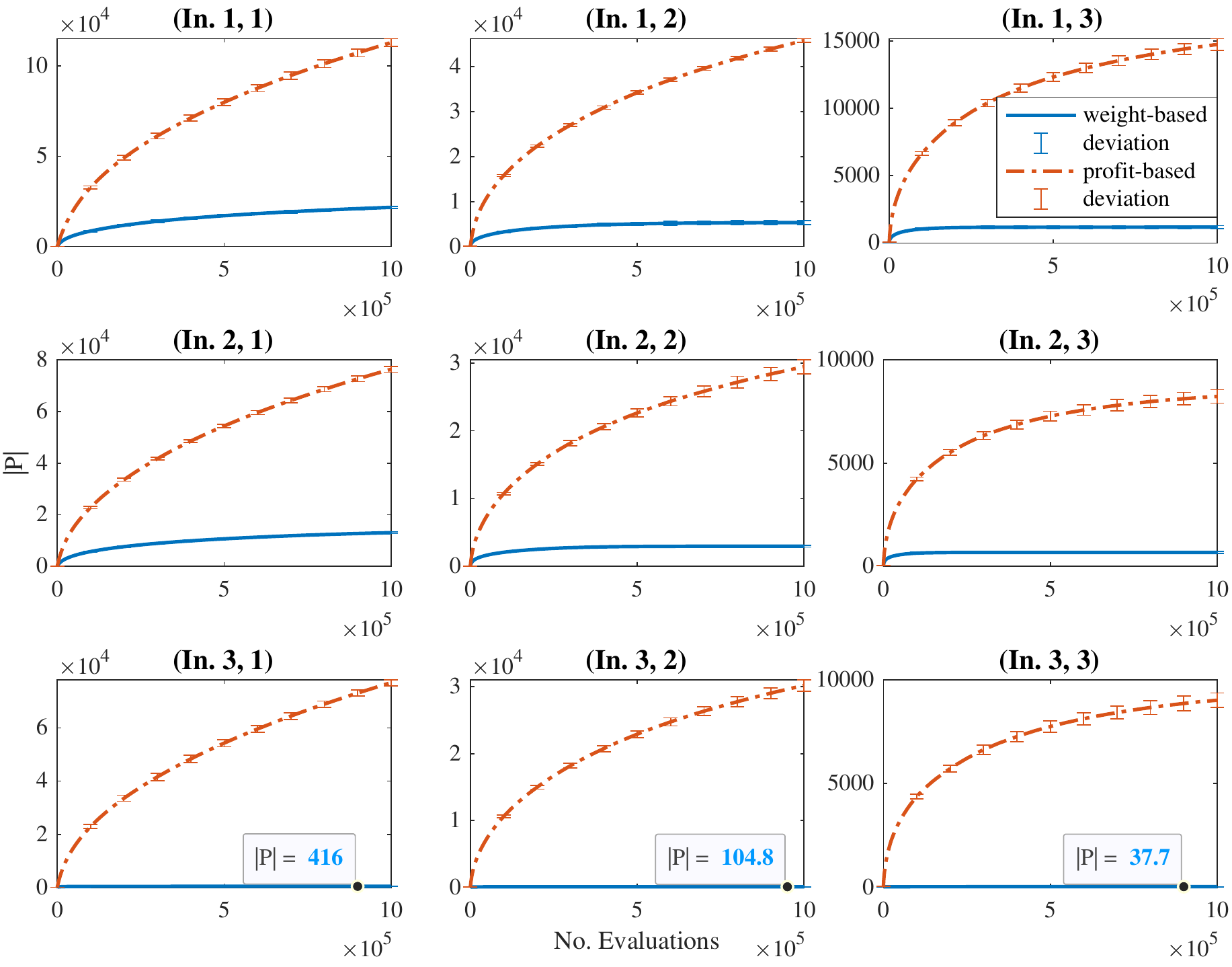}\vspace{-10pt}
    \caption{Means and standard deviations of population sizes over fitness evaluations (the filtering scheme is used).}\vspace{-15pt}
    \label{fig:trd}
\end{figure}

Table \ref{tbl:Res_W} and \ref{tbl:Res_P} show the ratio of Algorithm \ref{alg:QD_weight}  and Algorithm \ref{alg:QD_profit} in achieving the optimum for each instances in 30 independent runs, respectively. The tables also presents the mean of fitness evaluations for the algorithms to hit the optimal value or reach the limitation of CPU time. Table \ref{tbl:Res_W} shows that the ratio is $100\%$ for Algorithm \ref{alg:QD_weight} on all instances and all $\gamma \in \{1, 5, 25\}$. On the other hand, Algorithm \ref{alg:QD_profit} cannot achieve the optimums in all 30 runs, especially in large instances when $\gamma = 1$. However, increasing $\gamma$ to $25$ enables the algorithm to obtain the optimum in the most instances with exception of instance $9$. Moreover, the number of fitness evaluations required for Algorithm \ref{alg:QD_profit} is considerably higher than that of Algorithm \ref{alg:QD_weight}. We can conclude that Algorithm \ref{alg:QD_weight} is more time-efficient than Algorithm \ref{alg:QD_profit}, confirming our theoretical findings. This also suggests that the rounding errors are not detrimental to these algorithms' performances.

\begin{table*}[t!]
\centering
\caption{Number of fitness evaluations needed by Algorithm \ref{alg:QD_weight} to obtain the optimal solutions. 
}
\begin{scriptsize}
\renewcommand{\tabcolsep}{3.83pt}
\renewcommand{\arraystretch}{1}
\begin{tabular}{lccc|cc|cc|cc}
\toprule
\multirow{2}{*}{Inst.}&\multirow{2}{*}{$n$}&\multirow{2}{*}{$C$}&\multirow{2}{*}{$U$}&\multicolumn{2}{c|}{$\gamma=1$}&\multicolumn{2}{c|}{$\gamma=5$}&\multicolumn{2}{c}{$\gamma=25$}\\
\cmidrule(l{2pt}r{2pt}){5-10}
&&&&mean&time&mean&time&mean&time\\
\midrule
1&50&4029&1.37e+09&1.53e+06&2.74e+01&3.76e+05&4.35e+00&1.61e+05&1.96e+00\\
2&50&2226&7.57e+08&5.32e+05&8.14e+00&1.74e+05&2.32e+00&5.86e+04&9.75e-01\\
3&50&4567&1.55e+09&2.43e+04&3.48e-01&1.12e+04&1.81e-01&5.82e+03&1.08e-01\\
4&75&5780&6.63e+09&5.30e+06&8.28e+01&1.45e+06&2.07e+01&4.12e+05&5.60e+00\\
5&75&3520&4.04e+09&3.63e+06&7.11e+01&1.15e+06&2.44e+01&3.49e+05&5.96e+00\\
6&75&6850&7.86e+09&1.17e+05&2.21e+00&4.09e+04&5.91e-01&1.44e+04&2.32e-01\\
7&100&8375&2.28e+10&2.42e+07&4.75e+02&6.57e+06&1.33e+02&6.60e+07&1.34e+03\\
8&100&4815&1.31e+10&9.56e+06&2.02e+02&2.73e+06&5.07e+01&8.66e+05&1.30e+01\\
9&100&9133&2.48e+10&6.18e+05&1.06e+01&1.78e+05&3.26e+00&5.79e+04&1.34e+00\\
10&123&10074&5.10e+10&3.56e+07&6.65e+02&9.90e+06&1.77e+02&2.55e+07&4.71e+02\\
11&123&5737&2.90e+10&2.05e+07&5.40e+02&5.12e+06&9.38e+01&1.47e+06&3.54e+01\\
12&123&11235&5.68e+10&1.45e+06&3.74e+01&3.38e+05&5.27e+00&1.21e+05&1.90e+00\\
13&151&12422&1.16e+11&5.04e+07&9.15e+02&1.48e+07&2.73e+02&7.51e+06&1.86e+02\\
14&151&6924&6.48e+10&4.27e+07&9.75e+02&1.24e+07&2.73e+02&3.35e+06&5.71e+01\\
15&151&13790&1.29e+11&3.18e+06&9.87e+01&6.73e+05&1.70e+01&2.35e+05&3.94e+00\\
\bottomrule
\end{tabular}\vspace{-15pt}
\end{scriptsize}
\label{tbl:Res_W}
\end{table*}

\begin{table*}[t!]
\centering
\caption{Number of fitness evaluations needed by Algorithm \ref{alg:QD_profit} to obtain the optimal solutions}
\begin{scriptsize}
\renewcommand{\tabcolsep}{.85pt}
\renewcommand{\arraystretch}{1}
\begin{tabular}{lcc|ccc|ccc|ccc}
\toprule
\multirow{2}{*}{Inst.}&\multirow{2}{*}{$n$}&\multirow{2}{*}{$Q$}&\multicolumn{3}{c|}{$\gamma=1$}&\multicolumn{3}{c|}{$\gamma=5$}&\multicolumn{3}{c}{$\gamma=25$}\\
\cmidrule(l{2pt}r{2pt}){4-12}
&&&mean&ratio&$U$&mean&ratio&$U$&mean&ratio&$U$\\
\midrule
1&50&53928&1.15e+07&100&1.83e+10&3.68e+06&100&e.66e+09&1.21e+06&100&7.33e+08\\
2&50&23825&5.36e+06&100&8.10e+09&1.34e+06&100&1.62e+09&4.00e+05&100&3.24e+08\\
3&50&24491&3.86e+06&100&8.32e+09&1.27e+06&100&1.66e+09&1.27e+06&100&3.33e+08\\
4&75&78483&5.07e+07&100&9.00e+10&1.50e+07&100&1.8e+10&4.03e+06&100&3.6e+09\\
5&75&37237&2.74e+07&100&4.27e+10&7.36e+06&100&8.54e+09&2.32e+06&100&1.71e+09\\
6&75&38724&1.93e+07&100&4.44e+10&5.63e+06&100&8.88e+09&2.95e+06&100&1.78e+09\\
7&100&112635&2.24e+08&97&3.06e+11&6.67e+07&100&6.12e+10&1.72e+07&100&1.22e+10\\
8&100&48042&6.76e+07&100&1.31e+11&1.82e+07&100&2.61e+10&5.03e+06&100&5.22e+09\\
9&100&52967&7.99e+07&100&1.44e+11&2.86e+07&100&2.88e+10&1.18e+08&87&5.76e+09\\
10&123&135522&3.35e+08&87&6.86e+11&1.05e+08&100&1.37e+11&7.34e+07&100&2.74e+10\\
11&123&57554&1.47e+08&100&2.91e+11&3.58e+07&100&5.82e+10&8.87e+06&100&1.16e+10\\
12&123&63116&1.71e+08&97&3.19e+11&8.45e+07&100&5.38e+10&2.66e+07&100&1.28e+10\\
13&151&166842&3.81e+08&13&1.56e+12&1.39e+08&100&3.12e+11&5.69e+07&100&6.25e+10\\
14&151&70276&3.15e+08&77&6.58e+11&8.86e+07&100&1.32e+11&1.96e+07&100&2.63e+10\\
15&151&76171&2.64e+08&90&7.13e+11&9.58e+07&100&1.42e+11&3.16e+07&100&2.85e+10\\
16&194&227046&2.94e+08&0&4.51e+12&3.33e+08&23&9.01e+11&1.30e+08&100&1.8e+11\\
17&194&92610&3.43e+08&0&1.84e+12&2.22e+08&97&3.68e+11&5.80e+07&100&7.35e+10\\
18&194&97037&3.55e+08&0&1.93e+12&2.13e+08&87&3.85e+11&9.98e+07&100&7.7e+10\\
\bottomrule
\end{tabular}\vspace{-15pt}
\end{scriptsize}
\label{tbl:Res_P}
\end{table*}

For the last round of the experiments, we compare Algorithm \ref{alg:QD_weight} to two well-known EAs in the literature, $(1+1)$EA and $(\mu+1)$EA. Table \ref{tbl:Res_EAs} presents the ratio of the three algorithms in achieving the optimum and the mean of fitness evaluations required for them to reach the optimum. As shown on the table, the performances of $(1+1)$EA and $(\mu+1)$EA deteriorate on the strongly correlated instances. It seems that $(1+1)$EA and $(\mu+1)$EA are prone to get stuck in local optima, especially in instances with a strong weights-profits correlation. On the other hand, the MAP-Elite algorithm performs equally good in all instances through the diversity of solutions. Moreover, the mean of its runtime is significantly less in the half of instances although the population size of Algorithm \ref{alg:QD_weight} can be significantly higher that the other two EAs.        
\begin{table*}[t!]
\centering
\caption{Comparison 
in ratio, number of required fitness evaluations and required CPU time for hitting the optimal value in 30 independent runs.}
\begin{scriptsize}
\renewcommand{\tabcolsep}{.7pt}
\renewcommand{\arraystretch}{1}
\begin{tabular}{lc|cccc|cccc|cccc}
\toprule
\multirow{2}{*}{Inst.}&\multirow{2}{*}{$n$}&\multicolumn{4}{c|}{QD}&\multicolumn{4}{c|}{$(1+1)$EA}&\multicolumn{4}{c}{$(\mu+1)$EA}\\
\cmidrule(l{2pt}r{2pt}){3-6}
\cmidrule(l{2pt}r{2pt}){7-10}
\cmidrule(l{2pt}r{2pt}){11-14}
&& ratio & mean & time & Stat & ratio  & mean & time & Stat & ratio  & mean & time & Stat\\
\midrule
1&50&\hl{100}&1.53e+06&2.74e+01&$2^-3^-$&40&1.32e+09&4.49e+03&$1^+3^*$&40&4.59e+08&4.92e+03&$1^+2^*$\\
2&50&100&5.32e+05&8.14e+00&$2^*3^*$&100&5.30e+05&2.02e+00&$1^*3^*$&100&6.10e+05&6.21e+00&$1^*2^*$\\
3&50&100&2.43e+04&3.48e-01&$2^+3^+$&100&1.01e+04&4.38e-02&$1^-3^*$&100&1.21e+04&1.50e-01&$1^-2^*$\\
4&75&100&5.30e+06&8.28e+01&$2^*3^*$&97&1.20e+08&5.70e+02&$1^*3^*$&100&3.46e+07&6.90e+02&$1^*2^*$\\
5&75&100&3.63e+06&7.11e+01&$2^*3^*$&100&6.34e+07&3.44e+02&$1^*3^*$&100&3.16e+07&4.44e+02&$1^*2^*$\\
6&75&100&1.17e+05&2.21e+00&$2^+3^+$&100&1.30e+04&8.98e-02&$1^-3^-$&100&2.14e+04&3.20e-01&$1^-2^+$\\
7&100&\hl{100}&2.42e+07&4.75e+02&$2^-3^-$&63&5.72e+08&3.26e+03&$1^+3^*$&43&2.51e+08&4.92e+03&$1^+2^*$\\
8&100&100&9.56e+06&2.02e+02&$2^+3^+$&100&2.33e+06&1.46e+01&$1^-3^*$&100&3.56e+06&5.81e+01&$1^-2^*$\\
9&100&100&6.18e+05&1.06e+01&$2^+3^+$&100&3.72e+04&2.24e-01&$1^-3^*$&100&5.03e+04&8.84e-01&$1^-2^*$\\
10&123&\hl{100}&3.56e+07&6.65e+02&$2^-3^-$&77&3.90e+08&2.44e+03&$1^+3^*$&47&2.34e+08&4.70e+03&$1^+2^*$\\
11&123&\hl{100}&2.05e+07&5.40e+02&$2^*3^+$&97&1.38e+08&1.13e+03&$1^*3^*$&87&6.10e+07&1.41e+03&$1^-2^*$\\
12&123&100&1.45e+06&3.74e+01&$2^+3^+$&100&6.55e+04&5.24e-01&$1^-3^*$&100&6.71e+04&1.34e+00&$1^-2^*$\\
13&151&\hl{100}&5.04e+07&9.15e+02&$2^*3^*$&97&1.25e+08&1.07e+03&$1^*3^*$&87&8.65e+07&2.14e+03&$1^*2^*$\\
14&151&100&4.27e+07&9.75e+02&$2^+3^+$&100&1.20e+07&1.06e+02&$1^-3^*$&100&1.10e+07&3.33e+02&$1^-2^*$\\
15&151&100&3.18e+06&9.87e+01&$2^+3^+$&100&1.17e+05&1.09e+00&$1^-3^*$&100&1.09e+05&2.59e+00&$1^-2^*$\\
16&194&\hl{100}&1.58e+08&4.22e+03&$2^-3^-$&57&4.99e+08&4.21e+03&$1^+3^*$&47&2.34e+08&5.47e+03&$1^+2^*$\\
17&194&\hl{100}&1.18e+08&2.25e+03&$2^-3^*$&57&4.29e+08&3.91e+03&$1^+3^*$&40&2.07e+08&4.87e+03&$1^*2^*$\\
18&194&100&7.76e+06&1.50e+02&$2^+3^+$&100&1.17e+05&1.42e+00&$1^-3^*$&100&1.37e+05&4.71e+00&$1^-2^*$\\
\bottomrule
\end{tabular}\vspace{-15pt}
\end{scriptsize}
\label{tbl:Res_EAs}
\end{table*}
\section{Conclusions}
In this study, we examined the capability of QD approaches and in particular, MAP-Elite in solving knapsack problem. We defined two behavioural spaces inspired by the classic DP approaches, and two corresponding MAP-Elite-based algorithms operating on these spaces. We established that they imitate the exact DP approach, and one of them behaves similarly to the classic FPTAS for KP under a specific parameter setting, making it a FPRAS. We then compared the runtime of the algorithms empirically on instances of various properties related to their hardness, and found that the MAP-Elite selection mechanism significantly boosts efficiency of EAs in solving KP in terms of convergence ratio, especially in hard instances. Inspecting the behavioural spaces and population sizes reveals that smaller populations correlate to faster optimisation, demonstrating a well-known trade-off between optimisation and exploring behavioural spaces.

It is an open question to which extent MAP-Elites can simulate DP-like behaviours in other problems with recursive subproblem structures. Moreover, it might be possible to make such approaches outperform DP via better controls of behavioural space exploration, combined with more powerful variation operators.

\section*{Acknowledgements}
This work was supported by the Australian Research Council through grants DP190103894 and FT200100536.
\bibliographystyle{splncs04}
\bibliography{Main}

\begin{thebibliography}{10}
\providecommand{\url}[1]{\texttt{#1}}
\providecommand{\urlprefix}{URL }
\providecommand{\doi}[1]{https://doi.org/#1}

\bibitem{alexander2017evolution}
Alexander, B., Kortman, J., Neumann, A.: Evolution of artistic image variants
  through feature based diversity optimisation. In: {GECCO}. pp. 171--178.
  {ACM} (2017)

\bibitem{BossekN021KP}
Bossek, J., Neumann, A., Neumann, F.: Breeding diverse packings for the
  knapsack problem by means of diversity-tailored evolutionary algorithms. In:
  {GECCO}. pp. 556--564. {ACM} (2021)

\bibitem{Bossek021tree}
Bossek, J., Neumann, F.: Evolutionary diversity optimization and the minimum
  spanning tree problem. In: {GECCO}. pp. 198--206. {ACM} (2021)

\bibitem{JakobQD}
Bossek, J., Neumann, F.: Exploring the feature space of {TSP} instances using
  quality diversity. CoRR  \textbf{abs/2202.02077} (2022)

\bibitem{CluneML13}
Clune, J., Mouret, J., Lipson, H.: Summary of "the evolutionary origins of
  modularity". In: {GECCO} (Companion). pp. 23--24. {ACM} (2013)

\bibitem{CullyM13}
Cully, A., Mouret, J.: Behavioral repertoire learning in robotics. In: {GECCO}.
  pp. 175--182. {ACM} (2013)

\bibitem{viet2020evolving}
Do, A.V., Bossek, J., Neumann, A., Neumann, F.: Evolving diverse sets of tours
  for the travelling salesperson problem. In: {GECCO}. pp. 681--689. {ACM}
  (2020)

\bibitem{DoGN021}
Do, A.V., Guo, M., Neumann, A., Neumann, F.: Analysis of evolutionary diversity
  optimisation for permutation problems. In: {GECCO}. pp. 574--582. {ACM}
  (2021)

\bibitem{DBLP:journals/tcs/DoerrENTT11}
Doerr, B., Eremeev, A.V., Neumann, F., Theile, M., Thyssen, C.: Evolutionary
  algorithms and dynamic programming. Theor. Comput. Sci.  \textbf{412}(43),
  6020--6035 (2011)

\bibitem{FontaineLKMTHN21}
Fontaine, M.C., Liu, R., Khalifa, A., Modi, J., Togelius, J., Hoover, A.K.,
  Nikolaidis, S.: Illuminating mario scenes in the latent space of a generative
  adversarial network. In: {AAAI}. pp. 5922--5930. {AAAI} Press (2021)

\bibitem{FontaineTNH20}
Fontaine, M.C., Togelius, J., Nikolaidis, S., Hoover, A.K.: Covariance matrix
  adaptation for the rapid illumination of behavior space. In: {GECCO}. pp.
  94--102. {ACM} (2020)

\bibitem{doi:10.1162/evcoa00274}
Gao, W., Nallaperuma, S., Neumann, F.: Feature-based diversity optimization for
  problem instance classification. Evol. Comput.  \textbf{29}(1),  107--128
  (2021)

\bibitem{DBLP:conf/foga/Horoba09}
Horoba, C.: Analysis of a simple evolutionary algorithm for the multiobjective
  shortest path problem. In: {FOGA}. pp. 113--120. {ACM} (2009)

\bibitem{LehmanS11}
Lehman, J., Stanley, K.O.: Abandoning objectives: Evolution through the search
  for novelty alone. Evol. Comput.  \textbf{19}(2),  189--223 (2011)

\bibitem{NeumannB021}
Neumann, A., Bossek, J., Neumann, F.: Diversifying greedy sampling and
  evolutionary diversity optimisation for constrained monotone submodular
  functions. In: {GECCO}. pp. 261--269. {ACM} (2021)

\bibitem{neumann2018discrepancy}
Neumann, A., Gao, W., Doerr, C., Neumann, F., Wagner, M.: Discrepancy-based
  evolutionary diversity optimization. In: {GECCO}. pp. 991--998. {ACM} (2018)

\bibitem{neumann2019evolutionary}
Neumann, A., Gao, W., Wagner, M., Neumann, F.: Evolutionary diversity
  optimization using multi-objective indicators. In: {GECCO}. pp. 837--845.
  {ACM} (2019)

\bibitem{NikfarjamB0N21b}
Nikfarjam, A., Bossek, J., Neumann, A., Neumann, F.: Computing diverse sets of
  high quality {TSP} tours by eax-based evolutionary diversity optimisation.
  In: {FOGA}. pp. 9:1--9:11. {ACM} (2021)

\bibitem{NikfarjamBN021a}
Nikfarjam, A., Bossek, J., Neumann, A., Neumann, F.: Entropy-based evolutionary
  diversity optimisation for the traveling salesperson problem. In: {GECCO}.
  pp. 600--608. {ACM} (2021)

\bibitem{NikfarjamMap}
Nikfarjam, A., Neumann, A., Neumann, F.: On the use of quality diversity
  algorithms for the traveling thief problem. CoRR  \textbf{abs/2112.08627}
  (2021)

\bibitem{NikTTPEDO}
Nikfarjam, A., Neumann, A., Neumann, F.: Evolutionary diversity optimisation
  for the traveling thief problem. CoRR  \textbf{abs/2204.02709} (2022)

\bibitem{PolyakovskiyB0MN14}
Polyakovskiy, S., Bonyadi, M.R., Wagner, M., Michalewicz, Z., Neumann, F.: A
  comprehensive benchmark set and heuristics for the traveling thief problem.
  In: {GECCO}. pp. 477--484. {ACM} (2014)

\bibitem{PughSS16}
Pugh, J.K., Soros, L.B., Stanley, K.O.: Quality diversity: {A} new frontier for
  evolutionary computation. Frontiers Robotics {AI}  \textbf{3}, ~40 (2016)

\bibitem{PughSSS15}
Pugh, J.K., Soros, L.B., Szerlip, P.A., Stanley, K.O.: Confronting the
  challenge of quality diversity. In: {GECCO}. pp. 967--974. {ACM} (2015)

\bibitem{RakicevicCK21}
Rakicevic, N., Cully, A., Kormushev, P.: Policy manifold search: exploring the
  manifold hypothesis for diversity-based neuroevolution. In: {GECCO}. pp.
  901--909. {ACM} (2021)

\bibitem{SteckelS21}
Steckel, K., Schrum, J.: Illuminating the space of beatable lode runner levels
  produced by various generative adversarial networks. In: {GECCO} Companion.
  pp. 111--112. {ACM} (2021)

\bibitem{DBLP:conf/evoW/Theile09}
Theile, M.: Exact solutions to the traveling salesperson problem by a
  population-based evolutionary algorithm. In: EvoCOP. Lecture Notes in
  Computer Science, vol.~5482, pp. 145--155. Springer (2009)

\bibitem{DBLP:Toth80}
Toth, P.: Dynamic programming algorithms for the zero-one knapsack problem.
  Computing  \textbf{25}(1),  29--45 (1980)

\bibitem{ulrich2011maximizing}
Ulrich, T., Thiele, L.: Maximizing population diversity in single-objective
  optimization. In: {GECCO}. pp. 641--648. {ACM} (2011)

\bibitem{Vijay2001}
Vazirani, V.V.: Approximation algorithms. Springer (2001)

\bibitem{ZardiniZZIF21}
Zardini, E., Zappetti, D., Zambrano, D., Iacca, G., Floreano, D.: Seeking
  quality diversity in evolutionary co-design of morphology and control of soft
  tensegrity modular robots. In: {GECCO}. pp. 189--197. {ACM} (2021)

\end{thebibliography}
\end{document}